\theoremstyle{plain}
\newtheorem{theorem}{Theorem}[section]
\theoremstyle{definition}
\theoremstyle{remark}
\algrenewcommand\algorithmicindent{1em}
\renewcommand{\Vec}[1]{\bm{#1}}
\newcommand{\Mat}[1]{\bm{#1}}
\newcommand{\inv}{{-1}}
\newcommand{\T}{\top}
\newcommand{\dataset}[1]{\mathcal{#1}}
\newcommand{\argmin}[1]{\underset{#1}{\operatorname{argmin}}~}
\newcommand{\ReLU}{\operatorname{ReLU}}
\newcommand{\B}{\bf}
\newcommand{\U}{\underline}
\icmltitlerunning{L3A: Label-Augmented Analytic Adaptation for Multi-Label Class Incremental Learning}
\begin{document}

\twocolumn[
\icmltitle{L3A: Label-Augmented Analytic Adaptation for \\
 Multi-Label Class Incremental Learning}

% It is OKAY to include author information, even for blind
% submissions: the style file will automatically remove it for you
% unless you've provided the [accepted] option to the icml2025
% package.

% List of affiliations: The first argument should be a (short)
% identifier you will use later to specify author affiliations
% Academic affiliations should list Department, University, City, Region, Country
% Industry affiliations should list Company, City, Region, Country

% You can specify symbols, otherwise they are numbered in order.
% Ideally, you should not use this facility. Affiliations will be numbered
% in order of appearance and this is the preferred way.
\icmlsetsymbol{equal}{*}

\begin{icmlauthorlist}

\icmlauthor{Xiang Zhang}{sch1}
\icmlauthor{Run He}{sch1}
\icmlauthor{Jiao Chen}{sch1}
\icmlauthor{Di Fang}{sch2} \\
\icmlauthor{Ming Li}{lab}
\icmlauthor{Ziqian Zeng}{sch1}
\icmlauthor{Cen Chen}{sch2}
\icmlauthor{Huiping Zhuang}{sch1}

\end{icmlauthorlist}

\icmlaffiliation{sch1}{Shien-Ming Wu School of Intelligent Engineering, South China University of Technology, Guangzhou, China}
\icmlaffiliation{sch2}{School of Future Technology, South China University of Technology, Guangzhou, China}
\icmlaffiliation{lab}{Guangdong Laboratory of Artificial Intelligence and Digital Economy (SZ)}

\icmlcorrespondingauthor{Huiping Zhuang}{hpzhuang@scut.edu.cn}

% You may provide any keywords that you
% find helpful for describing your paper; these are used to populate
% the "keywords" metadata in the PDF but will not be shown in the document
\icmlkeywords{Machine Learning, ICML}

\vskip 0.3in
]

% this must go after the closing bracket ] following \twocolumn[ ...

% This command actually creates the footnote in the first column
% listing the affiliations and the copyright notice.
% The command takes one argument, which is text to display at the start of the footnote.
% The \icmlEqualContribution command is standard text for equal contribution.
% Remove it (just {}) if you do not need this facility.

\printAffiliationsAndNotice{}  % leave blank if no need to mention equal contribution
% \printAffiliationsAndNotice{\icmlEqualContribution} % otherwise use the standard text.

\begin{abstract}
Class-incremental learning (CIL) enables models to learn new classes continually without forgetting previously acquired knowledge. Multi-label CIL (MLCIL) extends CIL to a real-world scenario where each sample may belong to multiple classes, introducing several challenges: label absence, which leads to incomplete historical information due to missing labels, and class imbalance, which results in the model bias toward majority classes. To address these challenges, we propose Label-Augmented Analytic Adaptation (L3A), an exemplar-free approach without storing past samples. L3A integrates two key modules. The pseudo-label (PL) module implements label augmentation by generating pseudo-labels for current phase samples, addressing the label absence problem. The weighted analytic classifier (WAC) derives a closed-form solution for neural networks. It introduces sample-specific weights to adaptively balance the class contribution and mitigate class imbalance. Experiments on MS-COCO and PASCAL VOC datasets demonstrate that L3A outperforms existing methods in MLCIL tasks. Our code is available at \url{https://github.com/scut-zx/L3A}.
\end{abstract}
\section{Introduction}
\label{sec1:Introduction}

Class-incremental learning (CIL) enables models to acquire knowledge of new classes in a phase-by-phase manner, while maintaining knowledge of previously learned classes without retraining the model from scratch. Research on CIL faces the severe catastrophic forgetting problem \cite{CF_Bower_PLM1989, CF_Ratcliff_PR1990}, manifested as performance degradation of previously learned classes. Existing CIL methods \cite{Survey:CIL_Zhou_TPAMI2024} have achieved considerable success in single-label CIL tasks, significantly reducing the computational resources for model retraining.

However, in complex real-world environments, a CIL agent needs to learn complex training data with more than one object in one image, termed multi-label CIL (MLCIL). \Cref{fig:mlcil_process} illustrates the detail paradigm, where each training sample is labeled only with the classes relevant to the current learning phase, while the labels of other classes are masked. The goal of MLCIL is to enable the model to eventually recognize all classes encountered throughout the learning process, even if some labels are masked during training phases.

\begin{figure}[t]
    \begin{center}
        \centerline{\includegraphics[width=\columnwidth]{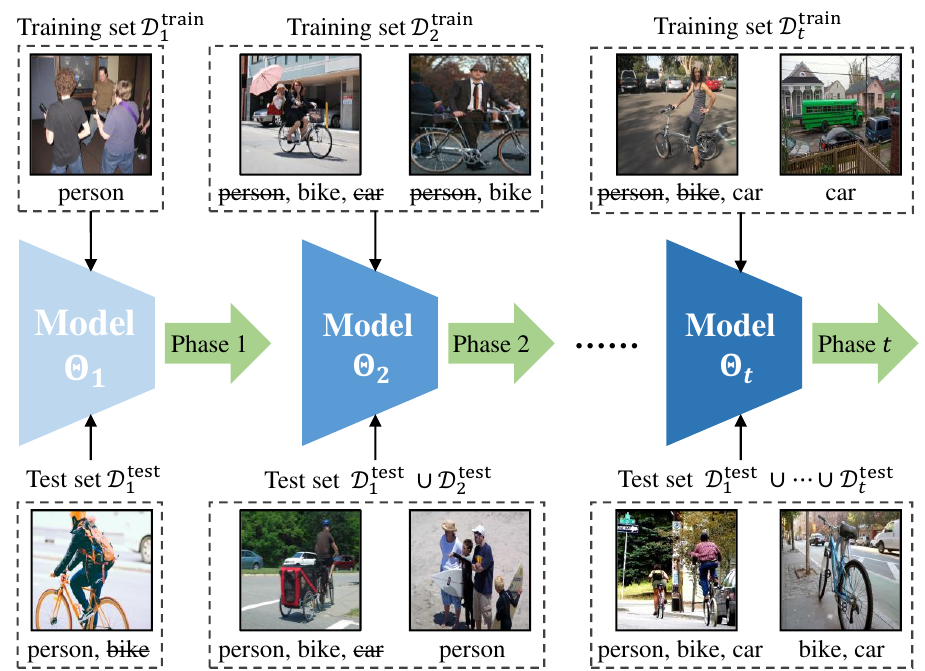}}
        \caption{The process of MLCIL. The model learns different classes such as `person', `bicycle', and `car' during phases $1$, $2$, and $t$. `\sout{person}' represents samples that belong to this class but lack the label. Different colors of Model $\Theta_1$, $\Theta_2$, and $\Theta_t$ represent the changes in model parameters caused by continual learning.}
        \label{fig:mlcil_process}
    \end{center}
    \vskip -0.3in
\end{figure}

MLCIL faces three major challenges: (1) \textit{label absence}: each sample is label-incomplete, containing only the labels from the current phase, with the labels from previous and future phases absent. For example, the left sample in the training set $\dataset{D}_t^{\text{train}}$ in \Cref{fig:mlcil_process} includes a person riding a bicycle in front of cars. While the sample contains three classes, it only annotated the `car' as a training label, with `\sout{person}' and `\sout{bike}' being part of the background. This causes more severe catastrophic forgetting, as the current samples become negative samples of previously learned labels \cite{KRT_Dong_ICCV2023}. (2) \textit{class imbalance}: the strong co-occurrence of classes in multi-label images leads certain classes to appear together frequently. For example, images with bicycles often include people. This creates an imbalanced class distribution, causing the model to favor high-frequency classes and neglect rare ones, leading to degraded performance. (3) \textit{privacy protection}: privacy constraints often prevent access to historical sample data in real-world scenarios, rendering many replay-based methods inapplicable.

Existing CIL methods fail to address all three challenges of MLCIL. Although many single-label CIL approaches mitigate catastrophic forgetting, they are limited by label absence and are unable to handle the MLCIL problem effectively. To address label absence, KRT \cite{KRT_Dong_ICCV2023} proposes a knowledge restore and transfer framework, and CSC \cite{CSC_Du_ECCV2024} introduces a class-incremental graph convolutional network to establish label relationships. Both methods rely on replaying samples to achieve optimal performance. MULTI-LANE \cite{Multi-line_Thomas_CoLLA2024} utilizes a pre-trained large model and prompt-tuning techniques to achieve replay-free MLCIL. However, these MLCIL methods fail to address the class imbalance issue and result in suboptimal performance.

Considering privacy protection scenarios, the exemplar-free CIL (EFCIL) methods without storing historical examples are becoming increasingly important. However, many EFCIL methods suffer from task-recency bias due to the back-propagation algorithm, where the model focuses on optimizing the most recent tasks and leads to poor performance on previous tasks. Analytic Continual Learning (ACL) \cite{ACIL_Zhuang_NeurIPS2022} is a new branch of EFCIL that achieves superior performance compared to replay-based methods. ACL enables incremental learning by iteratively solving a closed-form solution, which obtains an equivalent solution to joint training without storing historical data. This characteristic makes ACL well-suited for protecting privacy. However, existing ACL methods are primarily effective in single-label CIL, they encounter limitations when addressing the label absence and class imbalance problem in MLCIL.

To overcome these limitations, we propose the Label-Augmented Analytic Adaptation for Multi-Label Class-Incremental Learning (L3A). Our approach comprises two main components: (1) the pseudo-label (PL) module and (2) the weighted analytic classifier (WAC). The PL module generates label information for previously learned classes by inputting new samples into the old classifier, enabling analytic learning to derive optimal closed-form solutions. The WAC updates the classifier by recursively solving a ridge regression problem while introducing sample-specific weights to balance the loss contributions of different classes. Our contributions are summarized as follows:

\begin{itemize}
    \item We propose L3A, an exemplar-free approach that provides a closed-form solution to address catastrophic forgetting in MLCIL.
    \item We introduce the PL module to implement label-augmented by generating labels for previously learned classes, addressing the label absence problem.
    \item We introduce the WAC that iteratively updates the classifier by analytic learning and adaptively assigns sample-specific weights, solving the class imbalance problem.
    \item Experiments on MS-COCO and PASCAL VOC datasets demonstrate that our approach achieves state-of-the-art (SOTA) performance in MLCIL tasks.
\end{itemize}
\section{Related Works}
\label{sec2:Related Works}

\begin{figure*}[!ht]
    \begin{center}
        \centerline{\includegraphics[width=\linewidth]{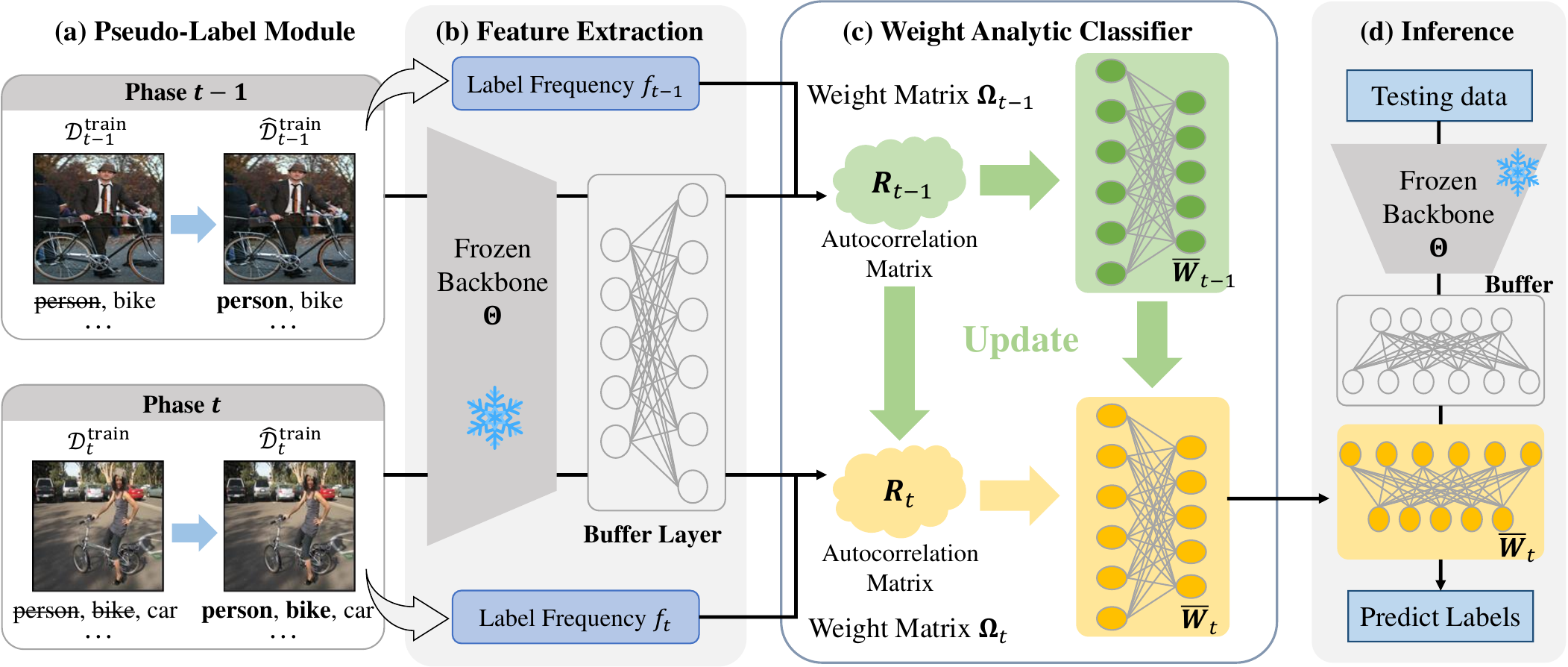}}
        \caption{The overview of our L3A method. 
        (a) The multi-label data stream arrives in phase, with each sample containing incomplete label information (e.g., `\sout{person}'). The pseudo-label module generates the augmented training set $\dataset{\hat{D}}_t^{\text{train}}$ (e.g., `\textbf{person}'). 
        (b) A frozen backbone with a buffer layer extracts sample features and projects them into a higher-dimensional space.
        (c) The weight analytic classifier iteratively updates the weight matrix $\Mat{\Omega}_{t}$, autocorrelation matrix $\Mat{R}_{t}$, and analytic classifier $\Mat{\bar{W}}_t$ across different phases.
        (d) The frozen backbone with a buffer layer and the current phase classifier are used for inference.}
        \label{fig:L3A}
    \end{center}
    \vskip -0.3in
\end{figure*}

%-------------------------------------------------------------------------
\subsection{Single-Label CIL}
\textbf{Replay-based} CIL methods such as iCaRL \cite{iCaRL_Rebuffi_CVPR2017}, ER \cite{ER_Rolnick_NeurIPS2019}, BiC \cite{BIC_Wu_CVPR2019}, DER++ \cite{DER++_Buzzega_NIPS2020}, TPCIL \cite{TPCIL_Tao_ECCV2020}, PODNet \cite{PODNet_Douillard_ECCV2020} mitigate catastrophic forgetting by revisiting old task data during training new task. Although the replay-based methods are straightforward and effective approaches, they face significant challenges under stricter privacy protection constraints.

\textbf{Regularization-based} CIL methods constrain certain model parameters to minimize changes to those critical for old task classification. EWC \cite{EWC_Kirkpatrick_PNAS2017}, oEWC \cite{oEWC_Schwarz_ICML2018} and RWalk \cite{RWalk_Chaudhry_ECCV2018} introduce regularization into the loss function and evaluate the importance of model parameter using the Fisher Information Matrix. LwF \cite{LwF_Li_TPAMI2017} introduces knowledge distillation \cite{KD_Hinton_arXiv2015} into the loss function to preserve old knowledge. However, those models can not achieve better performance compared to replay-based methods commonly.

\textbf{Prompt-based} CIL methods utilize pre-trained large models as backbone networks to enhance feature representation ability. L2P \cite{L2P_Wang_CVPR2022} and CODA-Prompt \cite{CODA-Prompt_Smith_CVPR2023} introduce prompt tuning into continual learning, EASE \cite{EASE_Zhou_CVPR2024} makes a task-specific subspace for each incremental task. 

\subsection{Multi-Label CIL}
To address the label absence, class imbalance, and privacy protection problem in MLCIL, several approaches are proposed. PRS \cite{PRS_Kim_ECCV2020} adopts a replay-based strategy to tackle intra- and inter-task class imbalance. OCDM \cite{OCDM_Liang_Arxiv2022} introduces a greedy algorithm to manage class distribution in memory effectively. KRT \cite{KRT_Dong_ICCV2023} incorporates incremental cross-attention modules to restore and transfer knowledge between continual learning stages. MULTI-LANE \cite{Multi-line_Thomas_CoLLA2024} integrates prompt-based techniques into MLCIL, enabling an exemplar-free MLCIL paradigm. CSC \cite{CSC_Du_ECCV2024} establishes cross-task label relationships and calibrates the model's over-confident output.

\subsection{Analytic Continual Learning (ACL)}
ACL is an emerging paradigm in continual learning, inspired by pseudo-inverse learning \cite{PIL_Guo_NeuCom2004}. It implements an exemplar-free approach that uses recursive least squares to find a closed-form solution for network parameters. ACIL \cite{ACIL_Zhuang_NeurIPS2022} demonstrates that using recursive least squares is equivalent to the joint learning approach firstly, enabling the application of analytic learning to the field of CIL. DS-AL \cite{DS-AL_Zhuang_AAAI2024} introduces another linear classifier to improve the under-fitting limitation. REAL \cite{REAL_He_arXiv2024} introduces a representation enhancing distillation process to strengthen the representation of the feature extractor. GACL \cite{GACL_Zhuang_NeurIPS2024} extends the ACL to generalized CIL, dealing with the mixed data categories and unknown sample size distribution. AIR \cite{AIR_Fang_arXiv2024} addresses the class imbalance in single-label CIL by introducing a re-weighting module.
\section{Method}
\label{sec3:Method}

%-------------------------------------------------------------------------
\subsection{Problem Definition}
The goal of MLCIL is for a model to incrementally learn the classification of multi-labels within input samples, while the labels introduced at each training phase are incomplete.

Let the phases as $\{\dataset{D}_1, \dataset{D}_2, \cdots, \dataset{D}_t, \cdots \}$, $\dataset{D}_t$ is divided into a training set $\dataset{D}_t^{\text{train}}$ and a test set $\dataset{D}_t^{\text{test}}$. $\dataset{D}_t^{\text{train}} = \{(\Mat{\mathcal{X}}_{t,1}, \Vec{y}_{t,1}), \cdots, (\Mat{\mathcal{X}}_{t,i}, \Vec{y}_{t,i}), \cdots, (\Mat{\mathcal{X}}_{t,N_t}, \Vec{y}_{t,N_t}) \}$ of size $N_{t}$, where $\Mat{\mathcal{X}}_{t,i}$ represents a input samples tensor, $\Vec{y}_{t,i}$ represents the corresponding multi-hot labels vector. Each label vector $\Vec{y}_{t,i} \in C^t $, and $C^t$ is the partial label space set specific for phase $t$, with the constraint that $\forall m, n (m \neq n), C^m \cap C^n = \varnothing$. $|C^t|$ is the number of unique classes in phase $t$. The model is trained on $\dataset{D}_t^{\text{train}}$, and evaluated the performance on $\dataset{D}_{1 : t}^{\text{test}}$, which defined as joint test sets $\dataset{D}_1^{\text{test}} \cup \cdots \cup \dataset{D}_t^{\text{test}}$. The cumulative label space for testing expands incrementally and is defined as $C^{1:t} = C^1 \cup \cdots \cup C^t$.

%-------------------------------------------------------------------------
\subsection{Proposed Framework}

Our proposed Label-Augmented Analytic Adaptation (L3A) framework comprises three components: (a) a pseudo-label module that completes missing label information by generating pseudo-labels for historical classes, (b) a feature extraction module that extracts and transforms input samples into high-dimensional feature representations, and (c) a weighted analytic classifier employs a closed-form solution with sample-specific weighting to mitigate class imbalance, with iterative updates. The detailed architecture of the L3A framework is depicted in \Cref{fig:L3A}.

Unlike other methods that rely on backpropagation for model updates, the L3A framework can be formulated as an optimization problem. During the $t$-th phase of MLCIL, the objective is to solve for the linear classifier $\Mat{W}_t$ as follows:

\begin{equation}\label{eq:L3A_overall}
    \argmin{\Mat{W}_t} \| \Mat{\Omega}_{1:t}^{1/2} (\phi(\Mat{\mathcal{X}}_{1:t}, \Mat{\Theta}) \Mat{W}_t - f_{\text{PL}}(\Mat{Y}_{1:t})) \|_\mathrm{F}^2 + \gamma \|\Mat{W}_t\|_\mathrm{F}^2,
\end{equation}
where $\Mat{\Omega}_{1:t}$ denotes the sample-specific weight matrix. $\phi(\cdot, \Mat{\Theta})$ represents the feature extractor function with $\Mat{\Theta}$ as the backbone. $\Mat{\mathcal{X}}_{1:t}$ and $\Mat{Y}_{1:t}$ are the input sample matrix and the true labels set from phase $1$ to $t$. $f_{\text{PL}}(\cdot)$ refers to the pseudo-label algorithm. $\| \cdot \|_\mathrm{F}$ indicates the Frobenius norm and $\gamma$ is the regularization term. 

%-------------------------------------------------------------------------
\subsection{Pseudo-Label Module}
\label{subsec3:PL}

The problem of label absence is similar to challenges encountered in semi-supervised learning, where the labels of samples from both past and future phases are missing. Motivated by this approach, we propose the pseudo-label (PL) method to address the label absence problem.

Specifically, in the phase $t$, previous classifier $\Mat{\bar{W}}_{t-1}$ already retains knowledge of historical classes. We utilize this classifier with the current training set $\dataset{D}_t^{\text{train}} = \{(\Mat{\mathcal{X}}_{t}, \Mat{Y}_{t})\}$ to generate the pseudo-label set $\Mat{\tilde{Y}}_t$ for previous classes. A confidence threshold $\eta \in (0, 1)$ activates pseudo-labels as follows: 
\begin{equation}  
{\tilde{y}}_{t,i}^{(k)} = \begin{cases}  
1 & \text{if } {p}_{t,i}^{(k)} \geq \eta \\  
0 & \text{otherwise},  
\end{cases}  
\end{equation}  
where $k$ indexes classes IDs, and ${p}_{t,i}^{(k)}$ denotes the predication score for the $k$-th class. The pseudo-label algorithm $f_{\text{PL}}(\cdot)$ results the augmented label set is defined as:
\begin{equation}
    \Mat{\hat{Y}}_t = \Mat{\tilde{Y}}_t \cup \Mat{Y}_t.
\end{equation} 
The augmented training set $\dataset{\hat{D}}_t^{\text{train}} = \{(\Mat{\mathcal{X}}_{t}, \Mat{\hat{Y}}_{t})\}$, which ensures the model incorporates historical class information while retaining the label set for the current phase. \Cref{aly:PL} analyses why the pseudo-label module works in our method. 

%-------------------------------------------------------------------------
\subsection{Feature Extraction}
\label{subsec3:FE}

To calculate the solution of the classifier, we first obtain the feature representation of the input sample. The function $\phi(\cdot, \Mat{\Theta})$ extracts the features from the input samples and applies a buffer layer $f_{\text{buffer}}(\cdot)$, mapping features to a higher-dimensional space, which enhances the representational capacity. This process results in the high-dimensional feature expansion matrix $\Mat{X}$, given by
\begin{equation}\label{eq:FE}
    \Mat{X}_t \gets \phi(\Mat{\mathcal{X}}_t, \Mat{\Theta}).
\end{equation}
$\Mat{\Theta}$ indicates the frozen backbone weight and the $f_{\text{buffer}}(\cdot)$ improves feature representation and captures more complex information by employing various mapping strategies \cite{ACIL_Zhuang_NeurIPS2022, GKEAL_Zhuang_CVPR2023}. We implement the $f_{\text{buffer}}(\cdot)$ as a random linear projection layer followed by a non-linear activation function, i.e. $f_{\text{buffer}}(\Mat{X})=\ReLU(\Mat{X}W_{\text{rand}})$. The weights of projection layer $W_{\text{rand}}$ are randomly initialized.

%-------------------------------------------------------------------------
\subsection{Weighted Analytic Classifier}
\label{subsec3:L3A}

In this section, we calculate the solution of the linear classifier in \Cref{eq:L3A_overall}. Let $\Mat{X}_t$ be the feature expansion matrix obtained from \Cref{subsec3:FE}, and $\Mat{\hat{Y}}_t$ be the corresponding augmented label set obtained from \Cref{subsec3:PL}. So $\Mat{X}_{1:t}$ and $\Mat{\hat{Y}}_{1:t}$ indicate the feature matrix and augmented label matrix form $1$ to $t$ phase. 
\begin{align*}
    \Mat{X}_{1:t} = \begin{bmatrix}
        \Mat{X}_{1:t-1}\\
        \Mat{X}_{t}
    \end{bmatrix}, \quad
    \Mat{\hat{Y}}_{1:t} = \begin{bmatrix}
        \Mat{\hat{Y}}_{1:t-1} & \Mat{0}\\
        \Mat{\tilde Y}_{t} & \Mat{Y}_{t}
    \end{bmatrix}.
\end{align*}
Unlike traditional CIL methods that rely on backpropagation, ACL methods use ridge regression to derive the analytical expression for the classifier. Therefore, we refer to the corresponding linear classifier as the Analytic Classifier (AC). In the general case, the loss function for the analytic classifier $\Mat{W}_t$ is:
\begin{equation}\label{eq:ACL_loss}
    \argmin{\Mat{W}_t} \| \Mat{X}_{1:t} \Mat{W}_t - \Mat{\hat{Y}}_{1:t} \|_\mathrm{F}^2 + \gamma \|\Mat{W}_t\|_\mathrm{F}^2.
\end{equation}
In \Cref{eq:ACL_loss}, each sample contributes the overall loss equally. In class imbalance scenarios, dominant classes with larger quantities introduce bias into the classifier, which tends to overfit the majority classes while underperforming on the minority classes. 

To address the class imbalance problem, we propose the weighted analytic classifier (WAC), which assigns a sample-specific weight $\omega_{t,i}$ based on the frequency of each class in the training data. For a class $k$ with frequency $f^{(k)}$, the class-specific weight is computed as $v_{t}^{(k)} = 1/\sqrt{f^{(k)}}$, ensuring that minority classes contribute more significantly to the loss. For a sample $(\Mat{\mathcal{X}}_{t,i}, \Vec{\hat{y}}_{t,i})$, the sample-specific weight $\omega_{t,i}$ is the average of the weights $v_{t}^{(k)}$ for all active classes in $\Vec{\hat{y}}_{t,i}$:  
\begin{equation}  
\omega_{t,i} = \frac{\sum_{k=1}^K \hat{y}_{t,i}^{(k)} \cdot v_{t}^{(k)}}{\sum_{k=1}^K \hat{y}_{t,i}^{(k)}},
\end{equation}  
where $\omega_{t,i}$ denotes the weight corresponding to the $i$-th sample from the $t$-th phase. This weighting mechanism ensures balanced contributions across classes, mitigating bias toward dominant classes and improving the classifier's robustness. To express the weighting mechanism more concisely, we reformulate it as a matrix representation. Specifically, $\Mat{\Omega}_{1:t}$ is defined as a diagonal matrix that represents the sample-specific weight matrix for all samples across the phases from $1$ to $t$:
\begin{align}\label{eq:Omega}
    \Mat{\Omega}_{1:t} 
    &= \text{diag}(\Mat{\Omega}_{1}, \Mat{\Omega}_{2}, \cdots, \Mat{\Omega}_{t}) \notag \\
    &= \text{diag}(\omega_{1,1}, \cdots, \omega_{2,1}, \cdots, \omega_{t,N_t}).
\end{align}
When the weighting mechanism is incorporated into the loss function \Cref{eq:ACL_loss}, the original formulation can be rewritten as follows:
\begin{align}\label{eq:L3A_loss}
    \argmin{\Mat{W}_t} \|\Mat{\Omega}_{1:t}^{1/2}(\Mat{X}_{1:t} \Mat{W}_t - \Mat{\hat{Y}}_{1:t}) \|_\mathrm{F}^2 + \gamma \|\Mat{W}_t\|_\mathrm{F}^2,
\end{align}
which is equal to the optimization problem \Cref{eq:L3A_overall}. $\Mat{\Omega}_{1:t}^{1/2}$ denotes the element-wise square root of this diagonal matrix, where each diagonal element is defined as the square root of the corresponding weight.

To minimize the loss function of \Cref{eq:L3A_loss}, we compute its derivate with respect to $\Mat{W}_t$:
\begin{equation}
    \begin{aligned}
        & \frac{\partial}{\partial \Mat{W}_t}\left(\|\Mat{\Omega}_{1:t}^{1/2}(\Mat{X}_{1:t} \Mat{W}_t - \Mat{\hat{Y}}_{1:t}) \|_\mathrm{F}^2 + \gamma \|\Mat{W}_t\|_\mathrm{F}^2\right) \\
        &= 2 \Mat{X}_{1:t}^{\T} \Mat{\Omega}_{1:t} \Mat{X}_{1:t} \Mat{W}_t 
           - 2 \Mat{X}_{1:t}^{\T} \Mat{\Omega}_{1:t} \Mat{\hat{Y}}_{1:t} 
           + 2 \gamma \Mat{W}_t.
    \end{aligned}
\end{equation}
Let the derivate equal to zero, we can get the closed-form solution of $\Mat{\bar{W}}_t$:
\begin{equation}\label{W_total}
    \Mat{\bar{W}}_t = (\Mat{X}_{1:t}^{\T} \Mat{\Omega}_{1:t} \Mat{X}_{1:t} + \gamma \Mat{I})^{\inv} \Mat{X}_{1:t}^{\T} \Mat{\Omega}_{1:t} \Mat{\hat{Y}}_{1:t}.
\end{equation}
The goal of L3A is to utilize $\Mat{\bar{W}}_{t-1}$ to update $\Mat{\bar{W}}_t$ recursively without other historical samples or data. We define the autocorrelation matrix in the $t$ phase:
\begin{equation}\label{R_total}
    \Mat{R}_t = (\Mat{X}_{1:t}^{\T} \Mat{\Omega}_{1:t} \Mat{X}_{1:t} + \gamma \Mat{I})^{\inv}.
\end{equation}
The iterative analytic solution process in WAC is shown in \Cref{thm:recursive}.
\begin{theorem}\label{thm:recursive}
The optimal solution of $\Mat{\bar{W}}_t$ can be calculated recursively by 
\begin{equation}
    \label{W_iter}
        \Mat{\bar{W}}_t = \left[\Mat{\bar{W}}_{t-1}-\Mat{R}_t \Mat{X}_t^{\T} \Mat{\Omega}_t \Mat{X}_t \Mat{\bar{W}}_{t-1} \; \; \; \Mat{R}_t \Mat{X}_t^{\T} \Mat{\Omega}_t \Mat{\hat{Y}}_t \right],
\end{equation}    
where the autocorrelation matrix can be calculated recursively by 
\begin{equation}
    \label{R_iter}
        \Mat{R}_t = \Mat{R}_{t-1} - \Mat{R}_{t-1} \Mat{X}_t^{\T} \left( \Mat{\Omega}_t^{\inv} + \Mat{X}_t \Mat{R}_{t-1} \Mat{X}_t^{\T} \right)^{\inv} \Mat{X}_t \Mat{R}_{t-1}.
\end{equation}
\end{theorem}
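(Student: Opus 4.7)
The plan is to derive both recursions from the block structure of the data and a single algebraic identity relating $\Mat R_t$ and $\Mat R_{t-1}$. The starting point is the closed-form expressions already in hand: $\Mat R_t = (\Mat X_{1:t}^{\T}\Mat\Omega_{1:t}\Mat X_{1:t} + \gamma \Mat I)^{-1}$ and $\Mat{\bar W}_t = \Mat R_t \Mat X_{1:t}^{\T} \Mat\Omega_{1:t} \Mat{\hat Y}_{1:t}$. Because $\Mat X_{1:t}$ is stacked row-wise from $\Mat X_{1:t-1}$ and $\Mat X_t$, and $\Mat\Omega_{1:t}$ is diagonal and therefore block-diagonal with blocks $\Mat\Omega_{1:t-1}$ and $\Mat\Omega_t$, the weighted Gram matrix splits as
\begin{equation*}
\Mat X_{1:t}^{\T}\Mat\Omega_{1:t}\Mat X_{1:t} = \Mat X_{1:t-1}^{\T}\Mat\Omega_{1:t-1}\Mat X_{1:t-1} + \Mat X_t^{\T}\Mat\Omega_t \Mat X_t.
\end{equation*}
This immediately gives the rank-update relation $\Mat R_t^{-1} = \Mat R_{t-1}^{-1} + \Mat X_t^{\T}\Mat\Omega_t\Mat X_t$, which will drive everything.

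Next, I would invoke the Woodbury matrix identity with $A = \Mat R_{t-1}^{-1}$, $U = \Mat X_t^{\T}$, $C = \Mat\Omega_t$, and $V = \Mat X_t$. This yields the claimed recursion for $\Mat R_t$ after a direct substitution. Before doing so, I would note that $\Mat\Omega_t^{-1} + \Mat X_t \Mat R_{t-1} \Mat X_t^{\T}$ is invertible: $\Mat\Omega_t$ is strictly positive diagonal (sample weights are positive), and the second summand is positive semidefinite since $\Mat R_{t-1}$ is PD (guaranteed by $\gamma > 0$ in the base case and preserved inductively by the rank update).

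For the $\Mat{\bar W}_t$ recursion, I would decompose $\Mat X_{1:t}^{\T}\Mat\Omega_{1:t}\Mat{\hat Y}_{1:t}$ using the block form of $\Mat{\hat Y}_{1:t}$. The block corresponding to old classes reduces, via the inductive identity $\Mat X_{1:t-1}^{\T}\Mat\Omega_{1:t-1}\Mat{\hat Y}_{1:t-1} = \Mat R_{t-1}^{-1}\Mat{\bar W}_{t-1}$, to $\Mat R_t \Mat R_{t-1}^{-1}\Mat{\bar W}_{t-1}$ (plus the current-phase contribution on old classes); the new-class block gives $\Mat R_t \Mat X_t^{\T}\Mat\Omega_t \Mat{\hat Y}_t$. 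The pivotal algebraic identity is
\begin{equation*}
\Mat R_t \Mat R_{t-1}^{-1} = \Mat R_t\bigl(\Mat R_t^{-1} - \Mat X_t^{\T}\Mat\Omega_t\Mat X_t\bigr) = \Mat I - \Mat R_t \Mat X_t^{\T}\Mat\Omega_t\Mat X_t,
\end{equation*}
which converts $\Mat R_t \Mat R_{t-1}^{-1}\Mat{\bar W}_{t-1}$ into $\Mat{\bar W}_{t-1} - \Mat R_t \Mat X_t^{\T}\Mat\Omega_t\Mat X_t\Mat{\bar W}_{t-1}$, matching the first column block in the theorem.

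The main obstacle I expect is not analytical but bookkeeping: the label matrix $\Mat{\hat Y}_{1:t}$ has an asymmetric block structure (historical samples carry zeros for the newly introduced class columns while current samples carry both pseudo-labels $\Mat{\tilde Y}_t$ and true labels $\Mat Y_t$), so partitioning the product $\Mat X_{1:t}^{\T}\Mat\Omega_{1:t}\Mat{\hat Y}_{1:t}$ into the ``old-class'' and ``new-class'' column slabs must be done carefully to land exactly on the form stated. Once the column partition is aligned with the $\Mat{\bar W}_{t-1}$ part and the new-class contribution $\Mat R_t \Mat X_t^{\T}\Mat\Omega_t\Mat{\hat Y}_t$, the two recursions fall out and the proof concludes by concatenating the two blocks horizontally.
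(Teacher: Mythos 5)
Your proposal follows essentially the same route as the paper's proof: the block split of the weighted Gram matrix yielding $\Mat R_t^{\inv}=\Mat R_{t-1}^{\inv}+\Mat X_t^{\T}\Mat\Omega_t\Mat X_t$, the Woodbury identity with the identical substitution $\Mat A=\Mat R_{t-1}^{\inv}$, $\Mat U=\Mat X_t^{\T}$, $\Mat C=\Mat\Omega_t$, $\Mat V=\Mat X_t$, and the conversion $\Mat R_t\Mat R_{t-1}^{\inv}=\Mat I-\Mat R_t\Mat X_t^{\T}\Mat\Omega_t\Mat X_t$ for the first column block. Your two additions --- the invertibility justification for $\Mat\Omega_t^{\inv}+\Mat X_t\Mat R_{t-1}\Mat X_t^{\T}$ and the explicit flag that the pseudo-label columns of $\Mat{\hat Y}_t$ also contribute to the old-class slab --- are points the paper's proof passes over silently, and both are correct.
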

\begin{proof}
    The proof is provided in \Cref{appdenix:L3A}.
\end{proof}
\Cref{thm:recursive} represents the update of classifier $\Mat{\bar{W}}_t$ at phase $t$ relies solely on the previous classifier $\Mat{\bar{W}}_{t-1}$ and the current phase data. The solution in \Cref{W_iter} is equivalent to the joint-learning formulation presented in \Cref{W_total}. The autocorrelation matrix $\Mat{R}_t$ assists in this computation. As shown in \Cref{R_total}, it stores knowledge with the inverse of feature matrix product from past phase samples. However, it is impossible to reverse-engineer the information of individual samples from $\Mat{R}_t$, so the L3A method  retains $\Mat{R}_t$ instead of storing any historical samples, achieving exemplar-free learning.

\Cref{alg:L3A} shows the pseudo-code of L3A, which utilizes the PL module to generate overall labels, extracts the sample features, and recursively updates the classifier by WAC.

\begin{algorithm}
    \caption{Training process of L3A}\label{alg:L3A}
    \begin{algorithmic}
        \State \textbf{Input:} Training set $\dataset{D}_1^{\text{train}}$, $\cdots$, $\dataset{D}_T^{\text{train}}$ with $\dataset{D}_t^{\text{train}} \sim (\Mat{\mathcal{X}}_{t}, \Mat{Y}_{t})$, the frozen backbone $\Mat{\Theta}$.
        \State \textbf{Initialization:} $\Mat{R}_0 \gets \gamma \Mat{I}$, $\Mat{\bar{W}}_0 \gets 0$
        \For{$\text{phase}$ $t = 1$ to $T$}
            \LComment{Pseudo-label module.}
            \State $\hat{\dataset{D}}_t^{\text{train}} \sim (\Mat{\mathcal{X}}_{t}, \Mat{\hat{Y}}_{t}) \gets f_{\text{PL}}(\dataset{D}_t^{\text{train}}, \eta, \Mat{\Theta}, \Mat{\bar{W}}_{t-1})$
            \LComment{Count the labels frequency.}
            \State $\Vec{f}_t \gets \text{Count}(\Mat{Y}_t, \Vec{f}_{t-1})$
            \LComment{Calculate the sample-specific weight.}
            \ForAll{$(\Mat{\mathcal{X}}_{t,i}, \Vec{\hat{y}}_{t,i}) \in \hat{\dataset{D}}_t^{\text{train}}$}
                \State $\omega_{t,i} \gets \text{Weight}(\Vec{f}_t, \Vec{\hat{y}}_{t,i})$ 
            \EndFor
            \State $\Mat{\Omega}_{t} \gets \text{diag}(\omega_{t,1}, \omega_{t,2}, \cdots, \omega_{t,N_t})$
            \LComment{Feature extraction.}
            \State $\Mat{X}_{t} \gets \phi(\Mat{\mathcal{X}}_{t}, \Mat{\Theta})$
            \State Update $\Mat{R}_t$ with \Cref{R_iter}
            \State Update $\Mat{\bar{W}}_t$ with \Cref{W_iter}
            \State \textbf{Return:} $\Mat{\bar{W}}_t$
        \EndFor
    \end{algorithmic}
\end{algorithm}
\section{Experiments}
\label{sec4:Experiments}

\begin{table*}[!t]
    \centering
    \small
    \caption{Comparison of the results among L3A and other methods on MS-COCO dataset. Memory represents the number of replay samples, where 0 indicates that the method is examplar-free. Data \textbf{in bold} represents the \textbf{best} results and data \underline{underline} represents the \underline{second-best} results.}
    \label{tab:COCOresult}
    \vskip 0.15in
    \begin{tblr}{
        colspec={Q[l, m]Q[c, m]Q[c, m]*{8}{X[c, m]}},
        colsep=4pt, stretch=0.8, cell{1}{2}={r=3,c=1}{m,1.5cm}, cell{1}{3}={r=3,c=1}{m,1.5cm}, cell{1}{4}={r=1,c=4}{c,m}, cell{1}{8}={r=1,c=4}{c,m}
    }
        \toprule   
        \SetCell[r=3, c=1]{c,m} Method & Type & Memory & MS-COCO B0-C10&&& & MS-COCO B40-C10&&& \\ 
        \cmidrule[lr]{4-7}\cmidrule[l]{8-11}
        & & & Avg. & \SetCell[r=1, c=3]{m,1.5cm} Last&& & Avg. & \SetCell[r=1, c=3]{m,1.5cm} Last&& \\ 
        \cmidrule[lr]{4}\cmidrule[lr]{5-7}\cmidrule[lr]{8}\cmidrule[l]{9-11}
        & & & mAP & CF1 & OF1 & mAP & mAP & CF1 & OF1 & mAP \\
        \midrule
        Upper-bound & Baseline & \SetCell[r=2]{m,1.5cm} 0 & - & 76.4 & 79.4 & 81.8 & - & 76.4 & 79.4 & 81.8  \\
        FT     \cite{ASL_Tal_ICCV2021}  & Baseline &  & 38.3 & 6.1 & 13.4 & 16.9 & 35.1 & 6.0 & 13.6 & 17.0  \\  
        \midrule
        TPCIL  \cite{TPCIL_Tao_ECCV2020}         & CIL & \SetCell[r=5]{m,1.5cm} 5/class & 63.8 & 20.1 & 21.6 & 50.8 & 63.1 & 25.3 & 25.1 & 53.1  \\
        PODNet \cite{PODNet_Douillard_ECCV2020}  & CIL & & 65.7 & 13.6 & 17.3 & 53.4 & 65.4 & 24.2 & 23.4 & 57.8   \\
        DER++  \cite{DER++_Buzzega_NIPS2020}     & CIL & & 68.1 & 33.3 & 36.7 & 54.6 & 69.6 & 41.9 & 43.7 & 59.0   \\
        KRT-R  \cite{KRT_Dong_ICCV2023}          & MLCIL & & 75.8 & 60.0 & 61.0 & 68.3 & 78.0 & 66.0 & 65.9 & 74.3 \\
        CSC-R  \cite{CSC_Du_ECCV2024}            & MLCIL &  & 79.2 & 67.3 & 68.1 & 73.7 & 78.4 & 68.1 & 69.0 & 75.6 \\
        \midrule
        iCaRL  \cite{iCaRL_Rebuffi_CVPR2017}     & CIL & \SetCell[r=8]{m,1.5cm} 20/class & 59.7 & 19.3 & 22.8 & 43.8 & 65.6 & 22.1 & 25.5 & 55.7  \\
        BiC    \cite{BIC_Wu_CVPR2019}            & CIL &  & 65.0 & 31.0 & 38.1 & 51.1 & 65.5 & 38.1 & 40.7 & 55.9    \\
        ER     \cite{ER_Rolnick_NeurIPS2019}     & CIL &  & 60.3 & 40.6 & 43.6 & 47.2 & 68.9 & 58.6 & 61.1 & 61.6    \\
        TPCIL  \cite{TPCIL_Tao_ECCV2020}         & CIL &  & 69.4 & 51.7 & 52.8 & 60.6 & 72.4 & 60.4 & 62.6 & 66.5    \\
        PODNet \cite{PODNet_Douillard_ECCV2020}  & CIL &  & 70.0 & 45.2 & 48.7 & 58.8 & 71.0 & 46.6 & 42.1 & 64.2    \\
        DER++  \cite{DER++_Buzzega_NIPS2020}     & CIL &  & 72.7 & 45.2 & 48.7 & 63.1 & 73.6 & 51.5 & 53.5 & 66.3    \\
        KRT-R  \cite{KRT_Dong_ICCV2023}          & MLCIL &  & 76.5 & 63.9 & 64.7 & 70.2 & 78.3 & 67.9 & 68.9 & 75.2  \\
        CSC-R  \cite{CSC_Du_ECCV2024} & MLCIL &  & \U{79.6} & \U{67.8} & \U{68.6} & \U{74.8} & \U{78.7} & \U{ 68.2} & \U{69.4} & \U{76.7} \\
        \midrule
        PRS   \cite{PRS_Kim_ECCV2020}      & MLCIL & \SetCell[r=4]{m,1.5cm} 1000 & 48.8 & 8.5 & 14.7 & 27.9 & 50.8 & 9.3 & 15.1 & 33.2 \\
        OCDM  \cite{OCDM_Liang_Arxiv2022}  & MLCIL & & 49.5 & 8.6  & 14.9 & 28.5 & 51.3 & 9.5  & 15.5 & 34.0  \\
        KRT-R \cite{KRT_Dong_ICCV2023}     & MLCIL & & 75.7 & 61.6 & 63.6 & 69.3 & 78.3 & 67.5 & 68.5 & 75.1 \\
        CSC-R \cite{CSC_Du_ECCV2024}       & MLCIL & & 79.3 & 67.5 & 68.5 & 73.9 & 78.5 & 67.8 & 69.7 & 76.0 \\
        \midrule
        PODNet     \cite{PODNet_Douillard_ECCV2020}   & CIL & \SetCell[r=5]{m,1.5cm} 0 & 43.7 & 7.2 & 14.1 & 25.6 & 44.3 & 6.8 & 13.9 & 24.7  \\
        oEWC       \cite{oEWC_Schwarz_ICML2018}       & CIL & & 46.9 & 6.7 & 13.4 & 24.3 & 44.8 & 11.1 & 16.5 & 27.3 \\
        LWF        \cite{LwF_Li_TPAMI2017}            & CIL &  & 47.9 & 9.0 & 15.1 & 28.9 & 48.6 & 9.5 & 15.8 & 29.9  \\
        % CODA-P     \cite{CODA-Prompt_Smith_CVPR2023}  & CIL &  & 74.0 & 48.1 & 47.7 & 65.4 & 73.9 & 56.1 & 57.4 & 67.5 \\
        KRT        \cite{KRT_Dong_ICCV2023}           & MLCIL &  & 74.6 & 55.6 & 56.5 & 65.9 & 77.8 & 64.4 & 63.4 & 74.0 \\
        CSC        \cite{CSC_Du_ECCV2024}             & MLCIL &  & 78.0 & 64.9 & 66.8 & 72.8 & 78.2 & 65.7 & 67.0 & 75.0 \\
        % MULTI-LANE \cite{Multi-line_Thomas_CoLLA2024} & MLCIL &  & 79.1 & 65.1 & 62.8 & 74.5 & \U{78.8} & 66.0 & 66.6 & 76.6 \\
        \midrule
        \SetRow{gray!10} L3A & MLCIL & 0 & \B81.5 & \B69.7 & \B72.5 & \B77.6 & \B79.9 & \B69.5 & \B72.9 & \B78.8 \\
        \bottomrule
    \end{tblr}
\end{table*}

%-------------------------------------------------------------------------
\subsection{Setting}

\subsubsection{Datasets} We follow previous works \cite{KRT_Dong_ICCV2023,Multi-line_Thomas_CoLLA2024} in MLCIL and evaluate our method on MS-COCO 2014 \cite{COCO_Lin_ECCV2014} and PASCAL VOC 2007 \cite{VOC_Mark_IJCV2010} datasets. MS-COCO dataset contains 120k images with 80 category labels, PASCAL VOC dataset contains 10k images with 20 category labels. Those datasets are suitable for MLCIL, but they both exhibit issues with class imbalance. Following previous research \cite{KRT_Dong_ICCV2023}, we evaluate our method using the following protocols on both MS-COCO and PASCAL VOC datasets: (1) \textit{MS-COCO B0-C10}: the model is trained across all 80 classes, divided into 8 continual learning phases, each learning 10 classes. (2) \textit{MS-COCO B40-C10}: the model is initially trained on 40 classes, followed by 4 incremental learning phases on the remaining 40 classes. (3) \textit{VOC B0-C4}: the model is trained on all 20 classes across 5 continual learning phases, with 4 classes learned in each phase. (4) \textit{VOC B10-C2}: the model has a base training on 10 classes, with the subsequent 10 classes trained continually over 5 phases. The order of incremental training is the lexicographical order of class names.

\subsubsection{Evaluation Metrics}
Following previous MLCIL works \cite{KRT_Dong_ICCV2023,Multi-line_Thomas_CoLLA2024}, we adopt the mean average precision (mAP) as the primary evaluation metric for each MLCIL phase, and report both the average mAP (the average of the mAP across all phases) and the last mAP (the mAP of the last phase). We also report the per-class F1 score (CF1) and overall F1 score (OF1) for performance evaluation.

\subsubsection{Implementation Details}
We adopt ImageNet-21k pre-trained TResNetM \cite{ASL_Tal_ICCV2021} as our backbone (All compared methods adopt pre-trained TResNetM or ViT-B/16 as the backbone). The batch size is set to 64 for MS-COCO and 256 for PASCAL VOC. In all experimental protocols, we set the regularization term $\gamma$ in \Cref{eq:L3A_loss} to 1000, and the buffer layer size to 8192 for MS-COCO and PASCAL VOC.

\begin{table*}[!ht]
    \centering
    \small
    \caption{Comparison of the results between L3A and prompt-based CIL methods on MS-COCO dataset. Data \textbf{in bold} represents the \textbf{best} results and data \underline{underline} represents the \underline{second-best} results.}
    \label{tab:backboneresult}
    \vskip 0.15in
    \begin{tblr}{
        colspec={Q[l, m]Q[c, m]Q[c, m]*{4}{X[c, m]}},
        colsep=4pt, stretch=0.8,
        cell{1}{2}={r=2,c=1}{m,2cm}, cell{1}{3}={r=2,c=1}{m,1cm},
        cell{1}{4}={r=1,c=2}{c,m}, cell{1}{6}={r=1,c=2}{c,m}
    }
        \toprule
        \SetCell[r=2, c=1]{c,m} Method & Backbone & Param. & MS-COCO B0-C10& & MS-COCO B40-C10& \\ 
        \cmidrule[lr]{4-5} \cmidrule[l]{6-7}
        & & & Avg. mAP & Last mAP & Avg. mAP & Last mAP \\ 
        \midrule
        Upper-bound & \SetCell[r=4, c=1]{m,2cm}{ViT-B/16} & \SetCell[r=4, c=1]{m,1cm}{86.0M} & - & 83.2 & - & 83.2 \\
        L2P \cite{L2P_Wang_CVPR2022}  && & 73.4 & 68.0 & 73.7 & 71.1 \\
        CODA-P   \cite{CODA-Prompt_Smith_CVPR2023}   && & 74.0 & 65.4 & 73.9 & 67.5 \\
        MULTI-LANE \cite{Multi-line_Thomas_CoLLA2024}  && & \U{79.1} & \U{74.5} & \U{78.8} & \U{76.6} \\ 
        \midrule
        \SetRow{gray!10} L3A & TResNet-M & 29.4M & \B81.5 & \B77.6 & \B79.9 & \B78.8 \\ 
        \bottomrule
    \end{tblr}
\end{table*}

\begin{figure*}[!ht]
    \begin{center}
        \centerline{\includegraphics[width=\linewidth]{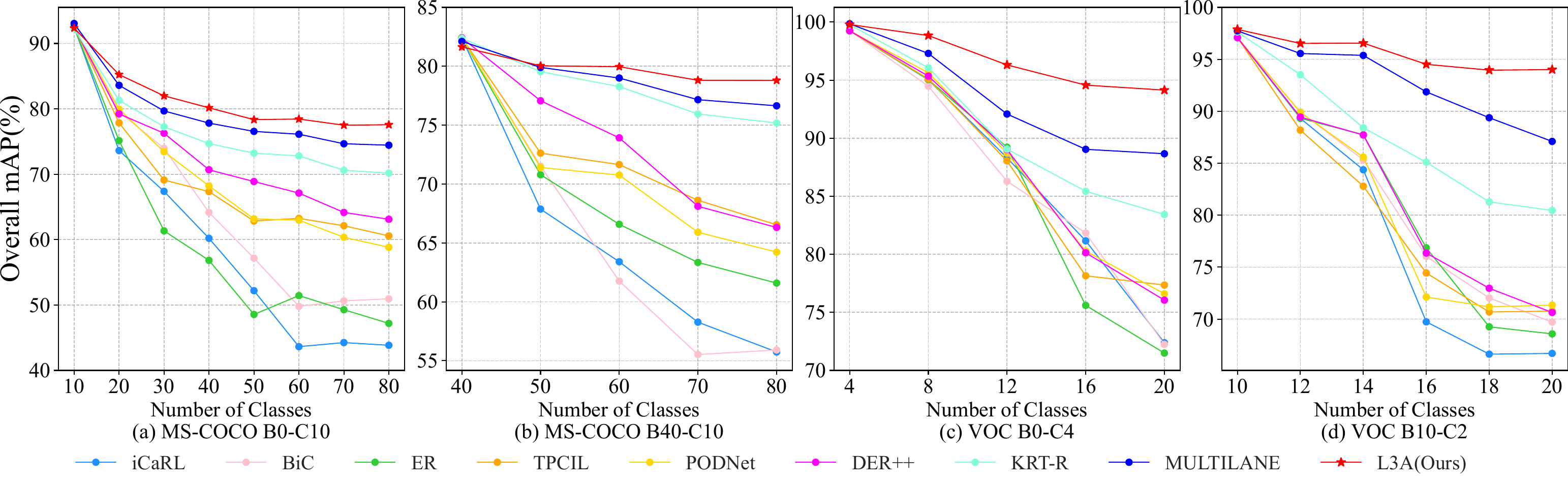}}
        \caption{Comparison results (mAP\%) on MS-COCO and PASCAL VOC datasets under different protocols against competitive methods.}
        \label{fig:comparison_result}
    \end{center}
    \vskip -0.2in
\end{figure*}

\subsection{Comparison Methods}
We select several advanced methods in single-label class incremental and multi-label class incremental for comprehensive comparison.

\paragraph{Baseline methods.} Upper-bound represents the ideal performance result, assuming the model has access to the data from all tasks simultaneously. Finetune (FT) is the simplest baseline method, where the model is trained directly on the current task with catastrophic forgetting.

\paragraph{CIL methods.} We select several classical CIL methods for comparison, including replay-based approaches such as PODNet \cite{PODNet_Douillard_ECCV2020} and regularization-based methods like oEWC \cite{oEWC_Schwarz_ICML2018}, with the overall methods presented in \Cref{tab:COCOresult}. Additionally, we include prompt-based methods like L2P \cite{L2P_Wang_CVPR2022}, which leverage the more powerful pre-trained large model ViT-B/16, with their results shown separately in \Cref{tab:backboneresult}.

\paragraph{Multi-label CIL methods.} We select five best-performing MLCIL methods for comparison, including two online replay-based methods PRS \cite{PRS_Kim_ECCV2020} and OCDM \cite{OCDM_Liang_Arxiv2022}, the regularization-based methods KRT and CSC, as well as their replay-based extensions KRT-R \cite{KRT_Dong_ICCV2023} and CSC-R \cite{CSC_Du_ECCV2024}, and the prompt-based method MULTI-LANE \cite{Multi-line_Thomas_CoLLA2024}, which currently represents the SOTA in exemplar-free MLCIL method.

%-------------------------------------------------------------------------
\subsection{Comparison Results}

\subsubsection{Results in MS-COCO}
\Cref{tab:COCOresult} shows the performance comparison on the MS-COCO B0-C10 and B40-C10 benchmarks. L3A demonstrates a significant advantage over other exemplar-free approaches across all metrics, achieving the highest last mAP of 77.6\% and 78.8\% on the MS-COCO B0-C10 and B40-C10 benchmarks respectively. L3A outperforms the best exemplar-free method CSC in the last mAP by 4.8\% and 3.8\% on these two benchmarks. Even compared with replay-based methods, L3A achieves competitive results, outperforming the best replay-based approach CSC-R in the last mAP by 2.8\% and 2.1\%. 

\Cref{tab:backboneresult} compares the performance of L3A with prompt-based methods on the MS-COCO B0-C10 and B40-C10 benchmarks. While prompt-based methods benefit from more powerful backbones with stronger representation capabilities, enabling competitive classification performance without replay, L3A achieves superior results with a less powerful backbone. Specifically, L3A outperforms the best prompt-based MLCIL method MULTI-LANE in the last mAP by 3.1\% and 2.2\% on the two benchmarks. This demonstrates that our method achieves stronger classification performance with fewer network parameters.

\subsubsection{Results in PASCAL VOC}
\Cref{tab:VOCresult} shows the performance comparison on the VOC B0-C4 and B10-C2 benchmarks. L3A achieves results close to the upper-bound, with the last mAP of 94.1\% and 94.0\% on the VOC B0-C4 and B10-C2 benchmarks respectively, surpassing the current SOTA methods MULTI-LANE by 5.3\% and 5.7\% on these two benchmarks.

\Cref{fig:comparison_result} shows the detailed comparison result between L3A and other competitive methods on the four protocols. L3A achieves more significant performance in every MLCIL phase. Moreover, these performance improvements become increasingly pronounced as the learning phase increases.

The comparison experiments on both MS-COCO and PASCAL VOC datasets demonstrate the remarkable performance of L3A. As an exemplar-free method, L3A achieves competitive advantages over replay-based and prompt-based methods, highlighting its effectiveness in addressing the challenges of MLCIL.

\begin{table}[!h]
    \centering
    \small
    \caption{Comparison of the results on PASCAL VOC dataset. Memory represents the number of replay samples, where 0 indicates that the method is examplar-free. Data \textbf{in bold} represents the \textbf{best} results and data \underline{underline} represents the \underline{second-best} results.}
    \label{tab:VOCresult}
    \vskip 0.1in
    \begin{tblr}{
        colspec={Q[l, m]Q[c, m]*{4}{X[c, m]}},
        colsep=4pt, stretch=0.8,
        cell{1}{2}={r=2,c=1}{m,1.5cm}, cell{1}{3}={r=1,c=2}{c,m},
        cell{1}{5}={r=1,c=2}{c,m}
    }
        \toprule
        \SetCell[r=2, c=1]{c,m} Method & Memory & VOC B0-C4& & VOC B10-C2& \\ 
        \cmidrule[lr]{3-4} \cmidrule[l]{5-6}
        & & Avg. mAP & Last mAP & Avg. mAP & Last mAP \\ 
        \midrule
        Upper-bound & \SetCell[r=2]{m,1.5cm} {0} & - & 94.7 & - & 94.7 \\
        FT          & & 82.1 & 62.9 & 70.1 & 43.0 \\ 
        \midrule
        iCaRL    & \SetCell[r=8]{m,1cm} {2/class} & 87.2 & 72.4  & 79.0 & 66.7 \\
        BiC      & & 86.8 & 72.2  & 81.7 & 69.7 \\
        ER       & & 86.1 & 71.5  & 81.5 & 68.6 \\
        TPCIL    & & 87.6 & 77.3  & 80.7 & 70.8 \\
        PODNet   & & 88.1 & 76.6  & 81.2 & 71.4 \\
        DER++    & & 87.9 & 76.1  & 82.3 & 70.6 \\
        KRT-R    & & 90.7 & 83.4  & 87.7 & 80.5 \\
        CSC-R    & & 92.4 & 87.9  & 91.6 & 87.8 \\ 
        \midrule
        CSC         & \SetCell[r=3]{m,1.5cm}{0} & 90.4 & 85.1 & 89.0 & 83.8 \\
        CODA-P      & & 90.6 & 84.5 & 90.2 & 85.0 \\
        MULTI-LANE  & & \U{93.5} & \U{88.8} & \U{93.1} & \U{88.3} \\ 
        \midrule
        \SetRow{gray!10} L3A & 0 & \B96.7 & \B94.1 & \B95.6 & \B94.0 \\ 
        \bottomrule
    \end{tblr}
\end{table}

%-------------------------------------------------------------------------
\subsection{Pseudo-Label Addresses the Label Absence}
\label{aly:PL}

The PL module addresses the label absence problem by estimating missing labels from past phases. Specifically, it constructs an augmented label set $\Mat{\hat{Y}}_{1:t}$, which combines the true labels from the current phase with pseudo-labels for previous phases. This augmentation enables the analytic learning method in \Cref{eq:L3A_loss} to update $\Mat{\bar{W}}_t$ more accurately by leveraging a complete representation.

Without the label augmentation, the true label set $\Mat{Y}_t$ only contains current phase labels, model progressively loses label information learned in earlier phases. This limitation undermines the precision of the analytic solution for $\Mat{\bar{W}}_t$.

%-------------------------------------------------------------------------
\subsection{Weighted Analytic Classifier Addresses the Class Imbalance}
\label{aly:WAC}
The WAC addresses the class imbalance problem through a weight fusion strategy. The sample-specific weight $\omega_{t,i}$ for each sample $i$ at phase $t$ is the average of its constituent class-specific weights $v_t^{(k)} = 1/\sqrt{f^{(k)}}$, where $f^{(k)}$ is the frequency of class $k$. This adaptive design of $1/\sqrt{f^{(k)}}$ ensures that rare labels (with small $f^{(k)}$) contribute dominantly to $\omega_{t,i}$, providing smoother scaling and preventing excessive weighting. 

Theoretically, this weighting mechanism acts as a soft constraint on the closed-form solution \Cref{W_total}, redistributing the residual errors in the least squares objective to penalize misclassifications of rare labels more severely. Crucially, the fusion of class-specific weights ensures a balanced representation of both rare and frequent labels, mitigating the risk of overfitting to dominant classes while maintaining sensitivity to underrepresented ones. 

%-------------------------------------------------------------------------
\subsection{Ablation Study}

The ablation study in \Cref{tab:ablation} compares the performance of two key modules in L3A. We use the method of freezing the backbone and fine-tuning the classifier as the baseline, the update of the classifier is based on back-propagation. We introduce two additional variants of L3A. (1) L3A w/o WAC: This variant optimizes using only the PL module. (2) L3A w/o PL: This variant optimizes using only the WAC. 

Removing the WAC (L3A w/o WAC) results in a significant performance drop, it indicates the model without WAC fails to keep the knowledge of past knowledge. The removal of the PL module (L3A w/o PL) also causes performance degradation. By integrating both two modules, the L3A can effectively acquire new knowledge while retaining previously learned information, with benefits in addressing class imbalance. These results demonstrate both the strengths and effectiveness of the PL module and WAC within the proposed L3A framework.

\begin{table}[ht]
    \centering
    \small
    \caption{Performance comparison with pseudo-label (PL) module and weighted analytic classifier (WAC).}
    \label{tab:ablation}
    \vskip 0.1in
    \begin{tblr}{
        colspec={Q[l, m]Q[c, m]Q[c, m]*{4}{X[c, m]}},
        colsep=4pt, stretch=0.8,
        cell{1}{1}={r=2,c=1}{c,m}, cell{1}{2}={r=2,c=1}{c,m}, cell{1}{3}={r=2,c=1}{c,m},
        cell{1}{4}={r=1,c=2}{c,m}, cell{1}{6}={r=1,c=2}{c,m}
    }
        \toprule
        Model & PL & WAC & COCO B0-C10& & COCO B40-C10& \\ 
        \cmidrule[lr]{4-5} \cmidrule[l]{6-7}
        && & Avg. mAP & Last mAP & Avg. mAP & Last mAP  \\ 
        \midrule
        Baseline    & $\times$     & $\times$     & 48.5 & 25.6 & 45.9 & 26.3 \\
        \midrule
        (1) w/o WAC & $\checkmark$ & $\times$     & 70.9 & 56.7 & 77.7 & 72.1 \\
        (2) w/o PL  & $\times$     & $\checkmark$ & 81.4 & 77.3 & 79.6 & 78.4 \\
        \midrule
        L3A         & $\checkmark$ & $\checkmark$ & \B81.5 & \B77.6 & \B79.9 & \B78.8 \\
        \bottomrule
    \end{tblr}
\end{table}

\textbf{Regularization term.} As shown in \Cref{tab:ablation_gamma}, we observe that the performance of L3A in a wide range of $\gamma$ value. Setting $\gamma=1000$ provides good results across MS-COCO benchmarks.

\begin{table}[!h]
    \centering
    \small
    \caption{The ablation study on regularization term ($\gamma$).}
    \label{tab:ablation_gamma}
    \vskip 0.1in
    \begin{tblr}{
        colspec={Q[c, m]*{4}{X[c, m]}},
        colsep=4pt, stretch=0.8,
        cell{1}{2}={r=1,c=2}{c,m},
        cell{1}{4}={r=1,c=2}{c,m}
    }
        \toprule
        \SetCell[r=2, c=1]{c,m} $\gamma$ & COCO B0-C10& & COCO B40-C10& \\ 
        \cmidrule[lr]{2-3} \cmidrule[l]{4-5}
        & Avg. mAP & Last mAP & Avg. mAP & Last mAP \\
        \midrule
        0.1 & 80.41 & 76.83 & 79.36 & 78.22 \\
        1   & 80.39 & 76.90 & 79.37 & 78.27 \\
        10  & 80.49 & 76.90 & 79.38 & 78.28 \\
        100 & 80.86 & 77.09 & 79.51 & 78.29 \\
        1000 & \textbf{81.38} & 77.56 & \textbf{79.89} & \textbf{78.75} \\
        10000 & 81.22 & \textbf{77.61} & 79.67 & 78.69 \\
        \bottomrule
    \end{tblr}
\end{table}

\textbf{Buffer layer size.} \Cref{tab:ablation_buffer} shows that the accuracy of L3A improves with an increase in buffer layer size, but the improvement becomes negligible once the size reaches 8196. The size of 8196 is sufficient and a larger size entails more computation on matrix inversion.

\begin{table}[!h]
    \centering
    \small
    \caption{The ablation study on buffer layer size.}
    \label{tab:ablation_buffer}
    \vskip 0.1in
    \begin{tblr}{
        colspec={Q[c, m]*{4}{X[c, m]}},
        colsep=4pt, stretch=0.8,
        cell{1}{2}={r=1,c=2}{c,m},
        cell{1}{4}={r=1,c=2}{c,m}
    }
        \toprule
        \SetCell[r=2, c=1]{c,m} Buffer layer size & COCO B0-C10& & COCO B40-C10& \\ 
        \cmidrule[lr]{2-3} \cmidrule[l]{4-5}
        & Avg. mAP & Last mAP & Avg. mAP & Last mAP \\
        \midrule
        512   & 79.33 & 75.24 & 77.78 & 76.43 \\
        1024  & 80.47 & 76.52 & 78.89 & 77.67 \\
        2048  & 81.08 & 77.20 & 79.58 & 78.44 \\
        4096  & 81.38 & 77.55 & 79.91 & 78.77 \\
        6144  & \textbf{81.43} & \textbf{77.60} & 79.94 & 78.75 \\
        8192  & \textbf{81.43} & 77.56 & \textbf{79.96} & 78.79 \\
        12288 & 81.40 & 77.56 & 79.93 & \textbf{78.82} \\
        \bottomrule
    \end{tblr}
\end{table}

\textbf{Confidence threshold.} We analyze the effect of the confidence threshold $\eta$ in the PL module. As shown in \Cref{tab:threshold_coco}, the best performance is consistently observed when $\eta$ is set around 0.7. Motivated by \cite{KRT_Dong_ICCV2023}, we further explore a dynamic thresholding strategy (denoted as \textit{Dynamic} in \Cref{tab:threshold_coco}) that adaptively adjusts $\eta$ based on the number of pseudo-labels generated in each continual learning phase. However, this method introduces additional hyperparameters and offers no notable performance gain, suggesting that a fixed threshold of 0.7 remains a more effective and practical choice.

\begin{table}[!h]
    \centering
    \small
    \caption{The ablation study on confidence threshold ($\eta$).}
    \label{tab:threshold_coco}
    \vskip 0.1in
    \begin{tblr}{
        colspec={Q[c, m]*{4}{X[c, m]}},
        colsep=4pt, stretch=0.8,
        cell{1}{2}={r=1,c=2}{c,m},
        cell{1}{4}={r=1,c=2}{c,m}
    }
        \toprule
        \SetCell[r=2, c=1]{c,m} $\eta$ & COCO B0-C10& & COCO B40-C10& \\ 
        \cmidrule[lr]{2-3} \cmidrule[l]{4-5}
        & Avg. mAP & Last mAP & Avg. mAP & Last mAP \\
        \midrule
        0.55 & 71.25 & 67.21 & 78.30 & 76.12 \\
        0.6 & 78.63 & 74.21 & 79.52 & 78.09 \\
        0.65 & 80.97 & 77.08 & 79.85 & 78.68 \\
        0.7 & \textbf{81.45} & 77.57 & \textbf{79.91} & \textbf{78.77} \\
        0.75 & 81.36 & \textbf{77.58} & 79.80 & 78.62 \\
        0.8 & 81.34 & 77.57 & 79.69 & 78.47 \\
        \textit{Dynamic} & 81.12 & 76.56 & 79.86 & 78.45 \\
        \bottomrule
    \end{tblr}
\end{table}

\textbf{Weighting mechanism.} We evaluate different weighting strategies for the WAC module and find that the proposed class-specific weights $v_t^{(k)} = 1/\sqrt{f^{(k)}}$ formulation achieves the best performance in \Cref{tab:wac_weighting}. Its smoothing effect effectively alleviates the impact of extreme class imbalance.

\begin{table}[!h]
    \centering
    \small
    \caption{Different weighting mechanism in the WAC module.}
    \label{tab:wac_weighting}
    \vskip 0.1in
    \begin{tblr}{
        colspec={Q[c, m]*{4}{X[c, m]}},
        colsep=4pt, stretch=0.8,
        cell{1}{2}={r=1,c=2}{c,m},
        cell{1}{4}={r=1,c=2}{c,m}
    }
        \toprule
        \SetCell[r=2, c=1]{c,m} Weighting mechanism & COCO B0-C10& & COCO B40-C10& \\ 
        \cmidrule[lr]{2-3} \cmidrule[l]{4-5}
        & Avg. mAP & Last mAP & Avg. mAP & Last mAP \\
        \midrule
        $1/\sqrt{f^{(k)}}$ & \textbf{81.45} & \textbf{77.57} & \textbf{79.91} & \textbf{78.77} \\
        $1/f^{(k)}$ & 81.13 & 77.30 & 79.60 & 78.27 \\
        $1/(\log(f^{(k)})+1)$ & 81.36 & 77.41 & 79.87 & 78.27 \\
        \bottomrule
    \end{tblr}
\end{table}
\section{Conclusions}
In this paper, we focus on the multi-label class-incremental learning (MLCIL). We propose L3A to address label absence, class imbalance, and privacy protection. L3A is an exemplar-free approach comprising two key modules: the pseudo-label (PL) module, which solves label absence by generating pseudo-labels for previously learned classes in current phase samples, and the weighted analytic classifier (WAC) adaptively balances class distributions by introducing sample-specific weights while leveraging a closed-form analytical solution. Experimental results demonstrate that L3A achieves SOTA performance on MS-COCO and PASCAL VOC datasets.
\section*{Acknowledgements}
This research was supported by the National Natural Science Foundation of China (62306117, 62406114, 62472181), the Guangzhou Basic and Applied Basic Research Foundation (2024A04J3681), GJYC program of Guangzhou (2024D03J0005), National Key R \& D Project from Minister of Science and Technology (2024YFA1211500), the Fundamental Research Funds for the Central Universities (2024ZYGXZR074), Guangdong Basic and Applied Basic Research Foundation (2024A1515010220, 2025A1515011413), and South China University of Technology-TCL Technology Innovation Fund.
\section*{Impact Statement}
This paper presents work whose goal is to advance the field of Machine Learning. There are many potential societal consequences of our work, none which we feel must be specifically highlighted here.

\normalem % avoid the underline in reference
\bibliography{main}
\bibliographystyle{icml2025}

%%%%%%%%%%%%%%%%%%%%%%%%%%%%%%%%%%%%%%%%%%%%%%%%%%%%%%%%%%%%%%%%%%%%%%%%%%%%%%%
%%%%%%%%%%%%%%%%%%%%%%%%%%%%%%%%%%%%%%%%%%%%%%%%%%%%%%%%%%%%%%%%%%%%%%%%%%%%%%%
% APPENDIX

\newpage
\appendix
\onecolumn
%-------------------------------------------------------------------------
\section{Proof of \Cref{thm:recursive}}
\label{appdenix:L3A}
\begin{proof}
    The representation of $\Mat{\bar{W}}_{t-1}$ can be written as
    \begin{equation}
    \label{pf:W_{t-1}_total}
        \Mat{\bar{W}}_{t-1} = (\sum_{i=1}^{t-1} \Mat{X}_{i}^{\T} \Mat{\Omega}_{i} \Mat{X}_{i} + \gamma \Mat{I})^{\inv} \left[\Mat{X}_1^{\T} \Mat{\Omega}_1 \Mat{\hat{Y}}_1 \; \cdots \; \Mat{X}_{t-1}^{\T} \Mat{\Omega}_{t-1} \Mat{\hat{Y}}_{t-1} \right],
    \end{equation}
    and $\Mat{\bar{W}}_{t}$ can be written as
    \begin{equation}
    \label{pf:W_{t}_total}
        \Mat{\bar{W}}_{t} = (\sum_{i=1}^{t-1} \Mat{X}_{i}^{\T} \Mat{\Omega}_{i} \Mat{X}_{i} + \gamma \Mat{I} + \Mat{X}_{t}^{\T} \Mat{\Omega}_{t} \Mat{X}_{t})^{\inv} \left[\Mat{X}_1^{\T} \Mat{\Omega}_1 \Mat{\hat{Y}}_1 \; \cdots \; \Mat{X}_{t-1}^{\T} \Mat{\Omega}_{t-1} \Mat{\hat{Y}}_{t-1} \; \; \;  \Mat{X}_{t}^{\T} \Mat{\Omega}_{t} \Mat{\hat{Y}}_{t}\right].
    \end{equation}
    The autocorrelation matrix in phase $t-1$ can be defined as
    \begin{equation}
    \label{pf:R_{t-1}_total}
        \Mat{R}_{t-1} = (\sum_{i=1}^{t-1} \Mat{X}_{i}^{\T} \Mat{\Omega}_{i} \Mat{X}_{i} + \gamma \Mat{I})^{\inv}.
    \end{equation}
    % To express the formula more concisely, we define the cross-correlation matrix in phase $t-1$
    % \begin{equation}
    % \label{pf:Q_{t-1}_total}
    %     \Mat{Q}_{t-1} = \left[\Mat{X}_1^{\T} \Mat{\Omega}_1 \Mat{\hat{Y}}_1 \; \cdots \; \Mat{X}_{t-1}^{\T} \Mat{\Omega}_{t-1} \Mat{\hat{Y}}_{t-1} \right]
    % \end{equation}
    We can define $\Mat{R}_{t}$ from $\Mat{R}_{t-1}$ through \Cref{pf:R_{t-1}_total}
    \begin{align}
    \label{pf:R_{t}_total}
        \Mat{R}_{t} &= (\sum_{i=1}^{t-1} \Mat{X}_{i}^{\T} \Mat{\Omega}_{i} \Mat{X}_{i} + \gamma \Mat{I} + \Mat{X}_{t}^{\T} \Mat{\Omega}_{t} \Mat{X}_{t})^{\inv} \notag \\
        &= (\Mat{R}_{t-1}^{\inv} + \Mat{X}_{t}^{\T} \Mat{\Omega}_{t} \Mat{X}_{t})^{\inv}.
    \end{align}
    According to the Woodbury matrix identity, we have
    \begin{align*}
        (\Mat{A} + \Mat{U}\Mat{C}\Mat{V})^{-1} = \Mat{A}^{-1} - \Mat{A}^{-1}\Mat{U}(\Mat{C}^{-1} + \Mat{V}\Mat{A}^{-1}\Mat{U})^{-1}\Mat{V}\Mat{A}^{-1}.
    \end{align*}
    Let $\Mat{A} = \Mat{R}_{t-1}^{-1}$, $\Mat{U} = \Mat{X}_{t}^{\T}$, $\Mat{C} = \Mat{\Omega}_t$, and $\Mat{V} = \Mat{X}_t$ in \Cref{pf:R_{t}_total} , we have
    \begin{equation}
        \label{pf:R_iter}
            \Mat{R}_t = \Mat{R}_{t-1} - \Mat{R}_{t-1} \Mat{X}_t^{\T} \left( \Mat{\Omega}_t^{\inv} + \Mat{X}_t \Mat{R}_{t-1} \Mat{X}_t^{\T} \right)^{\inv} \Mat{X}_t \Mat{R}_{t-1}.
    \end{equation}
    Now we can obtain $\Mat{R}_t$ from last phase $\Mat{R}_{t-1}$ and other current phase data (e.g., $\Mat{X}_t$) recursively. The update of the autocorrelation matrix is proved.
    
    Substituting \Cref{pf:R_{t}_total} into \Cref{pf:W_{t}_total}, we have
    \begin{align}
    \label{pf:W_{t}_iter}
        \Mat{\bar{W}}_{t} &= \Mat{R}_t \left[\Mat{X}_1^{\T} \Mat{\Omega}_1 \Mat{\hat{Y}}_1 \; \cdots \; \Mat{X}_{t-1}^{\T} \Mat{\Omega}_{t-1} \Mat{\hat{Y}}_{t-1} \; \; \;  \Mat{X}_{t}^{\T} \Mat{\Omega}_{t} \Mat{\hat{Y}}_{t}\right] \notag \\ 
        &= \Mat{R}_t \left[\Mat{R}_{t-1}^{\inv}\Mat{\bar{W}}_{t-1} \; \; \;  \Mat{X}_{t}^{\T} \Mat{\Omega}_{t} \Mat{\hat{Y}}_{t}\right] \notag \\ 
        &= \Mat{R}_t \left[(\Mat{R}_{t}^{\inv} - \Mat{X}_{t}^{\T} \Mat{\Omega}_{t} \Mat{X}_{t}) \Mat{\bar{W}}_{t-1} \; \; \;  \Mat{X}_{t}^{\T} \Mat{\Omega}_{t} \Mat{\hat{Y}}_{t}\right] \notag \\
        &= \left[\Mat{\bar{W}}_{t-1}-\Mat{R}_t \Mat{X}_t^{\T} \Mat{\Omega}_t \Mat{X}_t \Mat{\bar{W}}_{t-1} \; \; \; \Mat{R}_t \Mat{X}_t^{\T} \Mat{\Omega}_t \Mat{\hat{Y}}_t \right],
    \end{align}
    which completes the proof.
\end{proof}
\end{document}